\newcommand*{\xhat}[1]{#1\kern-0.35em\hat{\phantom{#1}}}
\newcommand{\T}{\ensuremath{^\mathsf{T}}}
\newtheorem{theorem}{Theorem}[section]
\newtheorem{proposition}{Proposition}[section]
\newtheorem{assumption}{Assumption}
\newtheorem{lemma}{Lemma}[section]
\title{Learning to Control under Uncertainty with Data-Based Iterative Linear Quadratic Regulator}
\author{Ran Wang, Raman Goyal, and Suman Chakravorty   % <-this % stops a space
\thanks{The authors are with the Department of Aerospace Engineering, Texas A\&M University, College Station, TX 77843 USA. \{\tt rwang0417, ramaniitrgoyal92, schakrav\}@tamu.edu
}}
\begin{document}

\maketitle  

\begin{abstract}
This paper studies the learning-to-control problem under process and sensing uncertainties for dynamical systems. 
In our previous work, we developed a data-based generalization of the iterative linear quadratic regulator (iLQR) to design closed-loop feedback control for high-dimensional dynamical systems with partial state observation. This method required perfect simulation rollouts which are not realistic in real applications. In this work, we briefly introduce this method and explore its efficacy under process and sensing uncertainties. We prove that in the fully observed case where the system dynamics are corrupted with noise but the measurements are perfect, it still converges to the global minimum. However, in the partially observed case where both process and measurement noise exist in the system, this method converges to a biased ``optimum". Thus multiple rollouts need to be averaged to retrieve the true optimum. The analysis is verified in two nonlinear robotic examples simulated in the above cases.
\end{abstract} 
\begin{keywords}
Learning under noise, partial-state observation, data-based control, robotic motion planning.
\end{keywords}

% \IEEEpeerreviewmaketitle
\section{Introduction}
The optimal control of a nonlinear dynamical system is computationally intractable for complex high-dimensional systems owing to the ``curse of dimensionality"  \cite{dp_bertsekas} and the problem becomes even more formidable when the system has no known analytical model and is under partial state observation. In fact, most real-world problems are partially observed, which has been recognized as one of the major gaps that keep controllers designed in simulation from being applied successfully to real-world applications \cite{DRLreview}.

There has been significant work in the field of learning to control unknown dynamical systems using Reinforcement Learning (RL), with great progress in creating accurate models for complex robots \cite{mnih2015human}. Despite excellent performance on several tasks \cite{levine2016end}, most of the work is in simulation, and applying RL to real robots remains challenging \cite{DRLreview}. The performance of policies trained in simulation directly applied to real robots can be poor due to the ``sim-to-real" gap for numerous reasons. First, it is impossible to capture all the physics with the process and sensing uncertainties in a simulation model and the simulated sensor data can be very different from its real-world counterpart. %Second, a Markov Decision Process (MDP) is assumed in simulated dynamics, while the control and measurement can be asynchronous due to computation and communication delays on real robots. This latency violates the fundamental assumption of MDP and thus can cause failure to some RL algorithms \cite{xiao2020}. Third, safety constraints are necessary for real robots but may not be considered in simulation, which leads to differences between the feasible state and action spaces. 
Most importantly, as full-state measurements may not be available in real-world robots, the policy has to be trained with partial observation, which poses challenges to the RL algorithms. %Although the sim-to-real gap remains to be filled, it is still a promising direction to prototype the policy in simulation and generalize it to real-world applications with a small amount of training. 

%Another direction is to train directly on real robots, where some of the sim-to-real challenges disappear. Here the policy is trained with real sensor data under uncertainty. Once the policy is successfully trained, its performance is guaranteed. The ability to train on the robot also enables continuous improvement after deployment. With the initial policy trained in simulation, the robots can learn continuously in the real world to adapt to new environments \cite{DRLreview}. Also, learning on real robots is probably most similar to human learning \cite{ingredientofrw}. However, it is a very difficult task as resetting may not be feasible, human intervention may be required heavily and the reward can be evaluated only from onboard perception. Compared to learning in simulation, the speed of data collection can be much slower in the real world and the ability to train in parallel is limited due to the budget and human intervention required.

\noindent \underline{\textit{\textbf{Related Work:}}}
RL researchers have made progress in applying RL to the real world in both sim-to-real and learning on real robot directions. Previous work such as \cite{Li2021cassie} adopted domain randomization to address uncertainty and latency,
% . System parameters are randomly sampled and noise is added during training, 
which improves the robustness, but harms the optimality. \cite{Rudin2021} parallelized the simulations on multiple agents to decrease total training time and improve the robustness. Also, adapter networks are used to make simulated images more similar to their real-world counterparts \cite{rao2020} or the other way around \cite{james2019}. With these improvements, the trained policy can be directly applied to real robots with comparable performance \cite{Surmann2020DeepRL}. %In the optimal control community, previous work \cite{MA} conducts a deterministic optimization in simulation and uses an adaptive fuzzy estimator to estimate and compensate for the uncertainty. %To ensure safety on real robots, \cite{OFC} accounted for physics constraints in the optimization problem. 

In the direction of on-robot training, multiple learning processes can be run with different hyper-parameters on the same robot \cite{khadka2019}, 
% Algorithms can be developed to automatically tune the hyper-parameters \cite{SAC}. These techniques can 
which greatly shorten the tuning process. To tackle the resetting issue, a controller can be designed to reset the robot \cite{ingredientofrw} at the end of each rollout.
% to a random position for exploration of the environment and minimum human intervention \cite{ingredientofrw, bloeschlegged}.
This method requires expert knowledge to design the resetting controller, which could be challenging for complex robots. In fact, on-robot training requires resetting heavily due to the finite time rollouts needed by RL algorithms. %In many cases, the robots have to be designed such that resetting is easy \cite{haarnoja2019sacreal,ingredientofrw}. %For the delay between measurements and control actions, a recurrent neural network could be trained to extrapolate the observation to when the action is applied from past observations \cite{DRLreview}, or the observation space can be augmented with previous observations and actions \cite{haarnoja2019sacreal}. This idea is similar to the information state method we proposed in \cite{Wang_ICRA_2021}.
To take advantage of both simulation and on-robot training, one can prototype a policy in simulation and improve it with a relatively small amount of online training \cite{QTOPT}. 

We proposed a data-based learning-to-control approach in our previous work \cite{POD2C} for partially observed applications. In this paper, we focus on the problem of learning to control under uncertainty using the aforementioned partially observed data-based iLQR (POD-iLQR) algorithm. POD-iLQR is a data-based generalization of iLQR for partially observed problems. It converts partially observed problems to ``fully observed" problems using a suitably defined information state and achieves high training efficiency by decoupling the open-loop and feedback design \cite{cdc2021}. The information state-based Linear Time-Varying (LTV) Autoregressive–Moving-Average (ARMA) system identification method is used to estimate the ``fully observed" linearized information state model, which allows us to solve the optimal control problem while the system model is unknown. We have shown that our generalized iLQR method tackles the challenge of partial observation in a highly efficient fashion. In this paper, its performance under process and sensing uncertainties will be studied. The \textbf{\textit{primary contributions}} of this paper are as follows. We study the learning under noise problem with the generalized iLQR method mentioned above. We analyze its efficacy when the training is carried out under process and measurement noise. We prove its convergence to the global minimum in the full state observation case, i.e., when the state is measured perfectly but the system dynamics are corrupted with noise. We show that it constructs biased LTV systems and can not converge to the true optimum in the noisy partially observed case and that multiple rollouts need to be averaged to recover global optimality. The reason we use the POD-iLQR is that we have already shown that the basic fully observed approach is superior to state-of-the-art RL techniques \cite{cdc2021}, while at the same time, there are no generalizations of typical RL approaches to partially observed problems \cite{POD2C}. Albeit we test in simulations on relatively simple systems, nonetheless, this work allows us to clearly show the challenges of learning under uncertainty in the partially observed case, which we can only expect to be aggravated when trying to learn on a real system. Also, the analysis of the simulation results helps us further understand the convergence of POD-iLQR under uncertainty, which could help the development of the modified POD-iLQR algorithm and its deployment on real systems eventually.

The rest of the paper is organized as follows: Section II provides the optimal control problem formulation. Section III briefly introduces our generalized iLQR approach. Section IV is the main focus of this paper and generalizes the POD-iLQR to the fully observed and partially observed cases under noise. Section V analyzes the performance of POD-iLQR directly applied in fully observed and partially observed problems under uncertainty. Empirical results are shown to support the results in Section VI.

%%%%%%%%%%%%%%%%%%%%%%%%%%%%%%%%%%%%%%%%%%%%%%%%%%%%%%%%%%%%%
\section{Partially Observed Data-based iLQR (POD-iLQR)}\label{s:Inropod2c}
Let us start by writing the stochastic nonlinear dynamics in discrete time state space form as follows:
$
%\begin{equation}
x_{t+1} = f(x_t,u_t)+\omega_t,
%\end{equation} 
$
where $x_t$ is the state, $u_t$ is the control input of the system and $\omega_t$ is the process noise. Let us assume the observation model to be of the form:
$
% \begin{equation}
 z_{t} = h(x_t)+v_t,
% \end{equation}
$
where $z_t$ is the measurement and $v_t$ is the sensor noise.
Let us now define a finite horizon objective function as:\\
$
% \begin{align}
    J(z_0) = E\left[\sum_{t=0}^{T-1}c(z_t,u_t) + c_T(z_T)\right],
% \end{align}
$
where $c(z_t,u_t)$ denotes a running incremental cost and $c_T(z_T)$ denotes a terminal cost function. The POD-iLQR was proposed to find the control policy to minimize the cost function above with deterministic system dynamics and measurements, i.e., $\omega_t$ and $z_t$ are zero. The goal of this work is to apply POD-iLQR under uncertainty and analyze its efficacy. 

%%%%%%%%%%%%%%%%%%%%%%%%%%%%%%%%%%%%%%%%%%%%%%%%%%%%%%%%%%%%%%

In this section, we briefly introduce the main components of the POD-iLQR generalization to data-based partially observed problems and present the algorithm that we will analyze ``under noise" in the next section. The detailed algorithm and the global optimal solution analysis can be found in our previous work \cite{POD2C}.

\subsection{The Global Optimal Solution for the Partially Observed Problem}\label{s:infostate}
Let $Z_t^q = [z_{t-q}^T,z_{t-q+1}^T,\cdots,z_t^T]^T$ and $U_t^q = [u_{t-q}^T,u_{t-q+1}^T,\cdots,u_{t-1}^T]^T$, where $q$ is the number of outputs/inputs included in the information state. We make the following assumption:
\begin{assumption}
\textit{Observability:}
\label{a:obsv}
We assume that there exists a finite $\bar{q}$, such that for all $q\geq \bar{q}$, equation ${x}_{t-q} = \bar{f}(Z_t^q,U_t^q)$ has a unique solution for $x_{t-q}$, regardless of $(Z_t^q,U_t^q)$, where $ \bar{f}$ is the system dynamics w.r.t. $Z_t^q$ and $U_t^q$.
\end{assumption}
Let us now define the ``\textit{Information State}" $\mathcal{Z}_t^q$ at time $t$ as: 
$%$\begin{align}
\mathcal{Z}_t^q = \begin{bmatrix}
    z_t^T,z_{t-1}^T,\cdots,z_{t-q}^T,u_{t-1}^T,\cdots,u_{t-q}^T    \end{bmatrix}^T.
$%\end{align}
~Under Assumption \ref{a:obsv}, by taking the special case of a discrete system that is affine in control dynamics and transforming it into the information state form, the original partially observed optimal control problem can be equivalently posed as the following ``fully observed" optimal control problem in terms of the information state:
\begin{align}
    \bar{u}_t = \arg \min_{u_t}  \sum^{T-1}_{t=0} c(z_t, u_t) + c_T(z_T), \label{eq:OptProbZ} \\
    s.t. ~~~ \mathcal{Z}^q_{t+1} = \mathcal{Z}^q_t + \mathcal{F}(\mathcal{Z}^q_t)  \Delta t + \mathcal{G}(\mathcal{Z}^q_t) u_t \Delta t, \nonumber
\end{align}
where the constraint is the system dynamics in the information state form. The full development can be found in \cite{POD2C}. Then, we can extend our recent result on the globally optimal solution for the fully observed case \cite{tropaper2} to the above problem as shown in our companion work \cite{POD2C}.

\subsection{Open-Loop Optimal Trajectory Design using POD-ILQR}
\label{s:PODiLQR}
POD-iLQR takes advantage of iLQR in that the equations involved are given explicitly in terms of the LTV dynamics, which can be calculated in the information state form using a data-based LTV-ARMA identification method shown below.
Let us denote the nominal state and control trajectory by $\{\bar{x}_t, \bar{u}_t\}$ and the deviations from the nominal trajectory as $\delta x_t$ and $\delta u_t$, the LTV system linearized around the nominal trajectory can be modeled as:
$
% \begin{align}
    \delta x_t = A_{t-1} \delta x_{t-1} + B_{t-1} \delta u_{t-1}, ~ \delta z_t = C_t \delta x_t. \label{eq:LTV}
% \end{align}
$
Let us denote the nominal information state and deviations as $\bar{\mathcal{Z}}_t$ and $\delta \mathcal{Z}_t = (\delta z_t,\delta z_{t-1},\cdots,\delta z_{t-q+1},\delta u_{t-1},\cdots,\delta u_{t-q+1})$, where $\delta z_t = z_t - \bar{z}_t$ is the deviations from the nominal observation at time $t$. Then we can show the following result:
\begin{proposition}\label{p:ARMA}
An ARMA model of the order $q$ given by: $\delta z_{t} = \alpha_{t-1}  \delta z_{t-1}+\cdots+\alpha_{t-q}   \delta z_{t-q}   + \beta_{t-1} \delta u_{t-1}+\cdots+ \beta_{t-q}   \delta u_{t-q},$ exactly fits the LTV system given in Eq.~\eqref{eq:LTV} if matrix $O^q = \begin{bmatrix} A_{t-q}^T...A_{t-2}^T C_{t-1}^T, & \cdots, & A_{t-q}^TC_{t-q+1}^T, & C_{t-q}^T \end{bmatrix}^T$ is full column rank. The exact ARMA parameters that match the LTV system can then be written as:
$%\begin{align}
    \nonumber [\alpha_{t-1} ~|~  \alpha_{t-2} ~| \cdots|~\alpha_{t-q}] =  C_{t}A_{t-1}...A_{t-q}O^{q^+}, \\
    \nonumber [\beta_{t-1} ~|~  \beta_{t-2} ~| \cdots|~\beta_{t-q}] = - C_{t}A_{t-1}...A_{t-q}O^{q^+} G^q \\
   \nonumber  + \begin{bmatrix}
    C_{t}B_{t-1}  C_{t}A_{t-1}B_{t-2} ~ \cdots ~ C_{t}A_{t-1}...B_{t-q}
    \end{bmatrix}.
$%\end{align}
\end{proposition}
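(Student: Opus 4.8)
The plan is to reconstruct the window's base state $\delta x_{t-q}$ from the stacked past measurements and inputs, and then propagate it forward so that $\delta z_t$ is expressed entirely in terms of the ARMA regressors. First I would unroll the state recursion $\delta x_t = A_{t-1}\delta x_{t-1} + B_{t-1}\delta u_{t-1}$ from the base point $\delta x_{t-q}$, so that for each $0 \le k \le q$ one obtains $\delta x_{t-q+k} = (A_{t-q+k-1}\cdots A_{t-q})\,\delta x_{t-q}$ plus a sum of $A\cdots B$ products acting on the intervening inputs. Applying $C_{t-q+k}$ produces each measurement, and stacking $Y := [\delta z_{t-1}^T,\dots,\delta z_{t-q}^T]^T$ yields a relation $Y = O^q\,\delta x_{t-q} + G^q W$, where $W$ collects the window inputs $\delta u_{t-2},\dots,\delta u_{t-q}$, the block rows of the state coefficient are exactly $C_{t-1}A_{t-2}\cdots A_{t-q}$ down to $C_{t-q}$ (i.e.\ the matrix $O^q$ of the statement), and $G^q$ is the block-lower-triangular matrix of input-to-output products over the window.

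Next, the full-column-rank hypothesis on $O^q$ is precisely what makes its Moore--Penrose pseudoinverse a left inverse, $O^{q^+}O^q = I$. Because the observed quantity $Y - G^q W$ is by construction equal to $O^q\,\delta x_{t-q}$, and hence lies in the column space of $O^q$, applying $O^{q^+}$ recovers the base state \emph{exactly}, $\delta x_{t-q} = O^{q^+}(Y - G^q W)$, rather than in a least-squares sense. This exact invertibility is what upgrades the ARMA fit from approximate to exact.

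I would then propagate forward once more: $\delta z_t = C_t \delta x_t = C_t A_{t-1}\cdots A_{t-q}\,\delta x_{t-q} + [\,C_t B_{t-1}\ \ C_t A_{t-1} B_{t-2}\ \cdots\,]\,U$, where $U$ stacks $\delta u_{t-1},\dots,\delta u_{t-q}$. Substituting the reconstructed $\delta x_{t-q}$ and grouping terms, the coefficient multiplying $Y$ becomes $C_t A_{t-1}\cdots A_{t-q}\,O^{q^+}$, which I read off as $[\alpha_{t-1}\,|\cdots|\,\alpha_{t-q}]$; the coefficient multiplying the inputs is the sum of $-\,C_t A_{t-1}\cdots A_{t-q}\,O^{q^+} G^q$ and the direct feedthrough row $[\,C_t B_{t-1}\ \cdots\ C_t A_{t-1}\cdots B_{t-q}\,]$, which is exactly $[\beta_{t-1}\,|\cdots|\,\beta_{t-q}]$ as claimed.

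The step I expect to be the main obstacle is the index bookkeeping for $G^q$ and the alignment of the two input windows: the stacked measurements involve only the inputs $\delta u_{t-2},\dots,\delta u_{t-q}$, whereas the forward propagation of $\delta z_t$ additionally brings in $\delta u_{t-1}$. Reconciling these windows---effectively zero-padding $G^q$ so that the $\delta u_{t-1}$ contribution to $\beta_{t-1}$ arises solely from the feedthrough term $C_t B_{t-1}$---is the delicate part; the remainder is a direct, if tedious, matrix substitution and coefficient match.
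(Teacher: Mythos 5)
Your proposal is correct and follows essentially the same route the paper takes: the paper defers the formal proof of Proposition~\ref{p:ARMA} to \cite{POD2C}, but the identical derivation (with added noise terms) appears in the proof of Lemma~\ref{lm:partialdir} --- stack the past $q$ outputs to get $\delta Z_t^q = O^q\,\delta x_{t-q} + G^q\,\delta U_t^q$, use the full-column-rank hypothesis so that $O^{q^+}$ is an exact left inverse, reconstruct $\delta x_{t-q}$, propagate forward through $C_tA_{t-1}\cdots A_{t-q}$, and read off $\alpha$ and $\beta$. Your flagged subtlety about zero-padding $G^q$ so that $\delta u_{t-1}$ enters only through the feedthrough block $C_tB_{t-1}$ is exactly the right bookkeeping and is consistent with the paper's expressions.
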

Thus with a $q$ that satisfies Assumption \ref{a:obsv}, there always exists an exact fit for the ARMA model. This allows us to write linearized models at each step along the nominal trajectory in terms of the ARMA parameters. The proof is in \cite{POD2C}.

With the above result, iLQR can be generalized into the proposed POD-iLQR method in the following:\\
\textbf{Forward Pass:}
Given the initial information state $\mathcal{Z}_0$ and the nominal control sequence $\{u_t\}_{t=0}^{T-1}$ of the current iteration, the system can be simulated for one rollout to get the nominal information state trajectory $(\bar{\mathcal{Z}}_t,\bar{u}_t)$ of the current iteration.\\
\textbf{LTV System Identification:}
Next, we can find the local LTV information state system around the nominal trajectory, which can be written as:
$%$\begin{equation}
    \delta \mathcal{Z}_{t+1} = \mathcal{A}_t \delta \mathcal{Z}_t + \mathcal{B}_t \delta u_t
$%\end{equation}
, where $\mathcal{A}_t$ and $\mathcal{B}_t$ are the linearization of the information state dynamics as shown in Eq.~\eqref{c4tab}.
To estimate $\mathcal{A}_t$ and $\mathcal{B}_t$, input-output data can be collected from simulated rollouts with control perturbation about the nominal trajectory. The LTV system can be written in ARMA parameters as
$ \delta z_{t}^{(j)} = \alpha_{t-1} \delta z_{t-1}^{(j)} +\cdots+\alpha_{t-q} \delta z_{t-q}^{(j)} + \beta_{t-1} \delta u_{t-1}^{(j)}+\cdots+ \beta_{t-q} \delta u_{t-q}^{(j)},
$
where $\delta u_{t}^{(j)} \sim \mathcal{N}(0,\sigma I)$ is the control perturbation at step $t$ for the $j$\textsuperscript{th} rollout. The ARMA parameters can be solved using linear least squares. Next, $\mathcal{A}_t$ and $\mathcal{B}_t$ can be obtained using the ARMA parameters. Note that the original partially observed system is transformed into a fully observed information state system.
Please check \cite{Wang_ICRA_2021} for more details.\\
\textbf{Backward Pass:}
Given the LTV system identified above, POD-iLQR computes a local optimal control by solving the discrete-time Riccati equation:
$
% \begin{align}
    \delta u_t=R^{-1}\mathcal{B}_t^T(-v_{t+1}-V_{t+1}(\mathcal{A}_t\delta \mathcal{Z}_t + \mathcal{B}_t\delta u_t))-\bar{u}_t,
    % \nonumber &=-(R+\mathcal{B}^T_t V_{t+1}\mathcal{B}_t)^{-1}(R\bar{u}_t+\mathcal{B}_t^T v_{t+1}+\mathcal{B}_t^T V_{t+1}\mathcal{A}_t\delta \mathcal{Z}_t),
% \end{align}
$
which can be written in the linear feedback form $\delta u_t =-k_t-K_t\delta \mathcal{Z}_t$, where $k_t=(R+\mathcal{B}_t^T V_{t+1}\mathcal{B}_t)^{-1}(R\bar{u}_t+\mathcal{B}_t^T v_{t+1})$ and $K_t=(R+\mathcal{B}_t^T V_{t+1}\mathcal{B}_t)^{-1}\mathcal{B}^T_t V_{t+1}\mathcal{A}_t$, and
\begin{align}
    v_t &= l_{t,Z}+\mathcal{A}_t^T v_{t+1}-\mathcal{A}_t^T V_{t+1}\mathcal{B}_t(R+\mathcal{B}_t^T V_{t+1}\mathcal{B}_t)^{-1}\nonumber \\
    &\cdot(\mathcal{B}_t^T v_{t+1}+R\bar{u}_t) \label{eq:vt}
\end{align}
\begin{align}    V_t &= l_{t,ZZ}+\mathcal{A}_t^T (V_{t+1}^{-1}+\mathcal{B}_t R^{-1}\mathcal{B}_t^T)^{-1}\mathcal{A}_t\nonumber \\
    &=l_{t,ZZ}+\mathcal{A}_t^T V_{t+1}\mathcal{A}_t-\mathcal{A}_t^T V_{t+1}B_t(R+B_t^T V_{t+1}\mathcal{B}_t)^{-1} \label{eq:Vt} \nonumber \\
    &\cdot B_t^T V_{t+1}\mathcal{A}_t.
\end{align}
with the terminal conditions $v_T(x_T)=\frac{\partial c_T}{\partial \mathcal{Z}}|_{\mathcal{Z}_T}=\frac{\partial l}{\partial \mathcal{Z}}|_{\mathcal{Z}_T}$ and $V_T(x_T)=\nabla ^2_{ZZ}c_T|_{\mathcal{Z}_T}$. Given the terminal conditions and $(\mathcal{A}_t, \mathcal{B}_t)$, the sequence $v_t$ and $V_t$ can be computed in a backward sweep. Then, the corresponding gains $k_t$ and $K_t$ can be obtained for that trajectory. \\
\textbf{Trajectory Update:}
Given the gains from the backward pass, we can update the nominal control sequence as 
$%$\begin{align*}
    \bar{u}_t^{k+1} = \bar{u}_t^k + \alpha k_t + K_t(\mathcal{Z}_t^{k+1}-\mathcal{Z}_t^k),%\\
    \mathcal{Z}^{k+1}_0 = \mathcal{Z}^k_0,
$%\end{align*}
where $\alpha$ is the line search parameter. By applying the control update at each step in the forward pass, we can obtain the updated nominal trajectory.

We iterate the above steps till $R\bar{u}_t+\mathcal{B}_t^T v_{t+1}\approx 0$ and obtain the open-loop optimal trajectory.

%%%%%%%%%%%%%%%%%%%%%%%%%%%%%%%%%%%%%%%%%%%%%%%

%\begin{table*}[!hb]
% \centering
%\begin{align}\label{eq:bigeqn}
%    \setcounter{MaxMatrixCols}{20} 
%    \underbrace{\begin{bmatrix} \delta z_{t-1} \\ \delta z_{t-2} \\ \vdots \\ \delta z_{t-q+1}  \\ \delta z_{t-q}  \end{bmatrix}}_{\delta Z_t^q} &= \underbrace{\begin{bmatrix} C_{t-1}A_{t-2}...A_{t-q}  \\ C_{t-2}A_{t-3}...A_{t-q} \\ \vdots \\ C_{t-q+1} A_{t-q} \\ C_{t-q} \end{bmatrix}}_{O^q} \delta x_{t-q}  + 
%  \underbrace{\begin{bmatrix}
%  0&C_{t-1}B_{t-2} & \cdots & C_{t-1}A_{t-2}...B_{t-q+1}  & C_{t-1}A_{t-2}...B_{t-q}  \\ 
% 0&0 & C_{t-2}B_{t-3}  &   \cdots & %C_{t-2}A_{t-3}...B_{t-q} \\ 
% \vdots &\vdots &  \ddots & \vdots & \vdots  \\ 
%  0&0 &0 & 0 & C_{t-q+1} B_{t-q} \\ 
%  0&0 &0 &0 & 0  \end{bmatrix}}_{G^q} \underbrace{\begin{bmatrix}  \delta u_{t-1} \\\delta u_{t-2} \\ \vdots \\ \delta u_{t-q+1}  \\ \delta u_{t-q} \end{bmatrix}}_{\delta U_t^q} \nonumber \\
%  &+\underbrace{\begin{bmatrix}
%  0&C_{t-1} & C_{t-1}A_{t-2} & \cdots  & %C_{t-1}A_{t-2}...A_{t-q+1}  \\ 
%  0&0  & C_{t-2}  &   \cdots & C_{t-2}A_{t-3}...A_{t-q+1} \\ 
%  \vdots & \vdots &  \ddots & \vdots & \vdots  \\ 
%  0&0 &0 & 0 & C_{t-q+1} \\ 
%  0&0 &0 &0 & 0  \end{bmatrix}}_{G_{\omega}^q} \underbrace{\begin{bmatrix} \omega_{t-1} \\ \omega_{t-2} \\ \vdots \\ \omega_{t-q+1}  \\ \omega_{t-q} \end{bmatrix}}_{\Omega_t^q}+\underbrace{\begin{bmatrix} v_{t-1} \\ \vdots \\ v_{t-q+1}  \\ v_{t-q} \end{bmatrix}}_{V_t^q}
%\end{align}
%\end{table*}

%%%%%%%%%%%%%%%%%%%%%%%%%%%%%%%%%%%%%%%%%%%%%%%%
\begin{table*}[ht!]
% \centering
\begin{align}\label{c4tab}
    \setcounter{MaxMatrixCols}{20}
    \underbrace{
    \begin{bmatrix} 
    \delta z_{t} \\ \delta z_{t-1} \\ \delta z_{t-2} \\ \vdots \\ \delta z_{t-q+1} \\ \hline \delta u_{t-1} \\ \delta u_{t-2} \\ \delta u_{t-3} \\ \vdots \\ \delta u_{t-q+1} 
    \end{bmatrix}}_{\delta \mathcal{Z}_t} = 
    \underbrace{\begin{bmatrix} 
    \alpha_{t-1} & \alpha_{t-2} & \cdots  &  \alpha_{t-q+1}  & \alpha_{t-q} & \vline &  \beta_{t-2} & \beta_{t-3} & \cdots  &  \beta_{t-q+1}  
    & \beta_{t-q}    \\  
    1 & 0 & \cdots & 0 & 0 & \vline & 0 & 0 & \cdots & 0 
    & 0    \\  
    0 & 1 & \cdots & 0 & 0 & \vline & 0 & 0 & \cdots & 0 
    & 0    \\  
    \vdots & & \ddots &  & \vdots & \vline & \vdots &  & \ddots & \vdots
    & 0    \\
    0 & 0 & \cdots & 1 & 0 & \vline & 0 & 0 & \cdots & 0 
    & 0    \\
    \hline
    0 & 0 & \cdots & 0 & 0 & \vline & 0 & 0 & \cdots & 0 
    & 0    \\
    0 & 0 & \cdots & 0 & 0 & \vline & 1 & 0 & \cdots & 0 
    & 0     \\  
     0 & 0 & \cdots & 0 & 0 & \vline & 0 & 1 & \cdots & 0 
    & 0     \\  
    \vdots & & \ddots &  & \vdots & \vline & \vdots &  & \ddots & & \vdots    \\
    0 & 0 & \cdots & 0 & 0 & \vline & 0 & 0 & \cdots & 1
    & 0
    \end{bmatrix} }_{\mathcal{A}_{t-1}}
    \underbrace{\begin{bmatrix} \delta z_{t-1} \\ \delta z_{t-2} \\ \vdots \\ \delta z_{t-q+1}  \\ \delta z_{t-q} \\ \hline  \delta u_{t-2} \\ \delta u_{t-3} \\   \vdots \\ \delta u_{t-q+1} \\ \delta u_{t-q} \end{bmatrix}}_{\delta \mathcal{Z}_{t-1}} 
    + \underbrace{\begin{bmatrix} \beta_{t-1} \\ 0 \\ \vdots \\ 0 \\ 0 \\ \hline 1 \\ 0  \\ \vdots \\  0 \\0 \end{bmatrix}}_{\mathcal{B}_{t-1}} \delta u_{t-1} 
\end{align}
\end{table*}

\section{Learning under Uncertainty with POD-iLQR}
In our previous work, the POD-iLQR was tested in simulation without any process noise or measurement noise. The main focus of this work is to extend POD-iLQR and solve the partially observed optimal control problem in the presence of noise and unknown system dynamics. 

\subsection{POD-ILQR Extension for Learning Under Uncertainty}
\label{s:modify}
In the following, we study the problem of learning under uncertainty with POD-iLQR in two distinct cases.\\
{\textbf{1) Fully Observed Case with Process Noise}:} We assume that there is process noise in the system dynamics but the full state measurements are perfect, i.e., 
\begin{equation}
\label{eq:fullsys}
    x_{t+1} = f(x_t,u_t)+\omega_t, \; z_t = x_t.
\end{equation}
With process noise, the simulation in the system identification can deviate from the nominal and adversely affect the accuracy of the identified model. The inaccurate model then leads to difficulty in convergence. To keep the trajectory close to the nominal trajectory, we make the following modification to POD-iLQR:\\
\textit{\textbf{LTV System Identification:}} A feedback term in control is added using the gain $K$ obtained from the previous backward pass, i.e., $u_t = \bar{u}_t+\delta u_t+K_t(\delta z_t)$, where $\delta z_t = z_t - \bar{z}_t$ is the deviated full state measurement and $\delta u_t$ is the input perturbation sampled from a zero-mean i.i.d. process. The idea is that the feedback should keep the rollouts close to the nominal to ensure accurate models.\\ 
\noindent \textbf{2) Partially Observed Case:} We assume that both the system dynamics and the sensors are corrupted by noise. In addition, only a subset of the states is measured, i.e., 
\begin{equation}
\label{eq:partialsys}
    x_{t+1} = f(x_t,u_t)+\omega_t, \; z_t = C_tx_t+v_t,
\end{equation}
where $C_t \in \mathcal{R}^{n_z \times n_x}, n_z < n_x$ and $n_z$ is the number of outputs.
In this case, we also need to keep the simulated trajectories close to the nominal and thus we make the following modifications to POD-iLQR:\\
\textit{\textbf{LTV System Identification:}} A feedback term is added in the ARMA fitting step using the feedback gain from the last backward pass, i.e., $u_t = \bar{u}_t+\delta u_t+K_t(\delta \mathcal{Z}_t)$, where $\delta \mathcal{Z}_t = \mathcal{Z}_t - \bar{\mathcal{Z}}_t$ is the deviated information state vector as shown in Eq.~\eqref{c4tab} and $\delta u_t$ is the input perturbation sampled from a zero-mean i.i.d. process. \\
\noindent \textbf{3) Brute Force Averaging:} In the following section, we will show that even when the feedback term is added in the partially observed case, the modified POD-iLQR algorithm still gives a biased result instead of the true optimum due to the partial observation, the process, and the measurement noise. To remove this bias, we utilize the assumption that the process and measurement noise is zero-mean and make the following modifications to the POD-iLQR algorithm:\\
\textit{\textbf{Forward Pass:}} In each iteration, the forward pass simulation is run for $n_s$ number of rollouts. Then we take the average trajectory of the rollouts as the updated nominal trajectory.
\textit{\textbf{LTV System Identification:}} For each sequence of control perturbation $\{\delta u_t\}_{t=0}^{T-1}$, we run the simulations for $n_s$ number of rollouts and take the average. Also, a feedback term is added in control to keep the trajectories close to the nominal trajectory. As the noise is assumed to be zero mean, the averaged trajectories are used in the least square to identify the ARMA model. 

%%%%%%%%%%%%%%%%%%%%%%%%%%%%%%%%%%%%%%%%%%%%%%%%%%%%%%%%%%%%%%%%%%%
\subsection{Convergence Analysis of POD-iLQR under Uncertainty}
According to the results in our previous work \cite{POD2C}, the POD-iLQR algorithm in the deterministic environment is guaranteed to find the unique global minimum of the open-loop problem in Eq.~\eqref{eq:OptProbZ}. In this section, we analyze the convergence and optimality of the modified POD-iLQR in the two cases described above.
% \textcolor{red}{You never said how you modified iLQR for noise, after the rewrite of the PODiLQR section, please state explicitly how you change the LTV id step in iLQR to incorporate noise, and then proceed to the analysis below.}

\subsubsection{Global Convergence of POD-iLQR in the Fully Observed Case with Process Noise}
As mentioned in Section \ref{s:modify}, in the system simulations of the LTV system identification step, we implement a feedback term in control to ensure that the trajectories are close to the nominal. With this modified POD-iLQR method, we have the following result:
\begin{lemma}\label{lm:arma_full}
The ARMA model identified in the fully observed case using the method described in Section \ref{s:modify}.1 is unbiased with respect to the true LTV model.
\end{lemma}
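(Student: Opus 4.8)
The plan is to reduce the claim to a standard statement about ordinary least squares: the estimator is unbiased precisely when the regressors are uncorrelated with the equation residual. First I would specialize the linearized dynamics to the fully observed setting. Since $z_t = x_t$ we have $C_t = I$ and $\delta z_t = \delta x_t$, so linearizing $x_{t+1} = f(x_t,u_t) + \omega_t$ about the nominal $(\bar{x}_t,\bar{u}_t)$ gives, to first order, $\delta z_{t} = A_{t-1}\delta z_{t-1} + B_{t-1}\delta u_{t-1} + \omega_{t-1}$. Because the full state is measured, this is exactly a first-order relation: the true ARMA parameters are $\alpha_{t-1} = A_{t-1}$ and $\beta_{t-1} = B_{t-1}$, with all higher-lag coefficients vanishing, and the process noise $\omega_{t-1}$ plays the role of the regression residual.

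Next I would make explicit the effect of the added feedback $u_t = \bar{u}_t + \eta_t + K_t \delta z_t$, where $\eta_t$ is the zero-mean i.i.d. injected perturbation. This gives $\delta u_{t-1} = \eta_{t-1} + K_{t-1}\delta z_{t-1}$, so the control deviation is correlated with the state deviation. The key observation, and the crux of the lemma, is that this correlation is harmless: what matters for unbiasedness is whether the residual $\omega_{t-1}$ is orthogonal to the regressors $(\delta z_{t-1},\dots,\delta z_{t-q},\delta u_{t-1},\dots,\delta u_{t-q})$. I would establish orthogonality by a filtration/causality argument: $\delta z_{t-1}$ is a function only of $\{\omega_s,\eta_s : s \le t-2\}$ and the initial deviation, and $\delta u_{t-1}$ adds only the independent $\eta_{t-1}$; since $\{\omega_t\}$ is zero-mean, i.i.d., and independent of $\{\eta_t\}$, the term $\omega_{t-1}$ is independent of every regressor, hence $E[\,\omega_{t-1}\mid\text{regressors}\,] = 0$.

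To finish, I would write the stacked least-squares problem over the rollouts as $Y = \Phi\theta^\star + W$, where $\Phi$ collects the regressors, $\theta^\star$ the true ARMA parameters, and $W$ the stacked residuals. Then $\hat\theta = \theta^\star + (\Phi\T\Phi)^{-1}\Phi\T W$, and taking expectations, conditioning on $\Phi$ and using the orthogonality from the previous step together with the tower property, yields $E[\hat\theta] = \theta^\star$. I would note that invertibility of $\Phi\T\Phi$ (full column rank) is guaranteed by the persistent excitation supplied by the i.i.d. perturbation $\eta$, which injects a component of $\delta u$ independent of $\delta z$, so the regressors are not collinear despite the feedback.

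I expect the main obstacle to be the orthogonality argument of the second step, because the feedback term deliberately couples the input to the state and one must verify carefully that this coupling does not drag the current-step process noise into the regressors. It is worth contrasting with the partially observed case: there the ARMA regressors are lagged noisy outputs whose measurement and process noise reappear in the residual at overlapping times, so orthogonality fails and the estimate is biased, which is exactly the phenomenon the paper analyzes next.
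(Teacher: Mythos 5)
Your proposal is correct and takes essentially the same route as the paper: in the fully observed case an ARMA model of order $q=1$ is exact, the process noise $\omega_t$ is the only residual, and its independence from the regressors (which you justify more carefully than the paper does, via the filtration/causality argument) makes the least-squares estimate unbiased. The only cosmetic difference is that the paper regresses on the injected perturbation alone, identifies the closed-loop matrix $A_t + B_t K_t$, and recovers $A_t$ using the known $K_t$, whereas you fold the feedback into $\delta u_{t-1}$ and invoke persistent excitation of $\eta$ for identifiability --- the two parameterizations are equivalent.
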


\begin{proof}
Due to full state observation, the matrix $O^q = \begin{bmatrix} A_{t-q}^T...A_{t-2}^T C_{t-1}^T, & \cdots, & A_{t-q}^TC_{t-q+1}^T, & C_{t-q}^T \end{bmatrix}^T$ is full column rank. Thus an ARMA model with $q = 1$ can exactly fit the LTV system in Eq.~\eqref{eq:LTV} according to Proposition \ref{p:ARMA}. 
Then Eq.~\eqref{eq:LTV} and \eqref{eq:fullsys}
% , we have the following LTV system:
% $
% \begin{align}
  % \bar{z}_{t+1}+\delta z_{t+1} = A_t(\bar{z}_{t}+\delta z_{t})+B_t(\bar{u}_{t}+\delta u_{t}+K_t\delta z_t)+\omega_t,
% \end{align}
% $
leads to:
$
% \begin{align}
    \delta z_{t+1} = (A_t+B_tK_t)\delta z_{t}+B_t \delta u_t+\omega_t.
% \end{align}
$
Notice that $\omega_t$ is uncorrelated with $\delta z_t$ and $\delta u_t$. Thus using the LTV system identification described in Section \ref{s:PODiLQR}, the noise terms go to zero as the number of rollouts $n_s$ increases and the least square result becomes:
$
% \begin{align}
    \left[\hat{A}_t ~\vdots~ \hat{B}_t\right] = \left[A_t+B_tK_t ~\vdots~ B_t\right].
% \end{align}
$
% After $n_s$ number of rollouts, we have:
% \begin{align}
% \mathbb{Z} &= \begin{bmatrix} \delta {z_{t+1}^{(1)}}& \delta {z_{t+1}^{(2)}}& \ldots &\delta {z_{t+1}^{(n_s)}}\end{bmatrix}, \nonumber \\
% \mathbb{X} &= \begin{bmatrix} \delta {z_t^{(1)}}& \delta {z_t^{(2)}}& \ldots &\delta {z_t^{(n_s)}} \\ \delta {u_t^{(1)}}& \delta {u_t^{(2)}}& \ldots &\delta {u_t^{(n_s)}} \end{bmatrix}, \nonumber \\
% \mathbb{Z} &= \left[A_t+B_tK_t ~\vdots~ B_t\right]\mathbb{X}+\begin{bmatrix}{\omega_t^{(1)}}&  {\omega_t^{(2)}}& \ldots &{\omega_t^{(n_s)}} \end{bmatrix}. \nonumber
% \end{align}
%  Thus in the least square calculation, the second term on the RHS goes to zero as the number of rollouts $n_s$ increases. Then the least square result of the above equation can be written as:
% $
% % \begin{align}
%     \left[\hat{A}_t ~\vdots~ \hat{B}_t\right] = \mathbb{Z} \mathbb{X} \T(\mathbb{X}\mathbb{X} \T)^{-1}=\left[A_t+B_tK_t ~\vdots~ B_t\right].
% % \end{align}
% $
As $K_t$ is known from the last backward pass, it is trivial to recover $A_t, B_t$. Thus the ARMA model equals the true LTV model in Eq.~\eqref{eq:LTV}.
\end{proof}

% Next, we can show the following result:
\begin{theorem}\label{InformState_optimality_full}
Let the cost functions $l(\cdot)$, $c_T(\cdot)$, the drift $f(\cdot)$ and the input influence function $g(\cdot)$ be $\mathcal{C}^2$, i.e., twice continuously differentiable. Assume $f(\cdot)$ is affine in control. The POD-iLQR algorithm started at a feasible initial information state $\mathcal{Z}_0$ converges to the unique global minimum of the open-loop problem in Eq.~\eqref{eq:OptProbZ} when applied to the fully observed system with process noise in Eq.~\eqref{eq:fullsys}.
\end{theorem}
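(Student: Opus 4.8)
The plan is to reduce the noisy fully observed case to the deterministic one, for which global convergence to the unique minimizer of \eqref{eq:OptProbZ} is already established in \cite{POD2C, tropaper2}, and then to invoke that result. Concretely, I would argue that the iteration map generated by the modified POD-iLQR (forward pass, LTV identification, backward pass, trajectory update) coincides, in the large-rollout limit, with the deterministic POD-iLQR iteration map, so that the two algorithms share the same fixed point and the same convergence behavior.

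First I would pin down the model driving the backward pass. By Lemma \ref{lm:arma_full}, the identification step is unbiased: as $n_s$ grows, the least-squares estimate satisfies $[\hat{A}_t \mid \hat{B}_t] = [A_t + B_t K_t \mid B_t]$, and since $K_t$ is known from the previous backward pass, the recovered $(A_t, B_t)$, and hence the information-state matrices $(\mathcal{A}_t, \mathcal{B}_t)$ assembled as in \eqref{c4tab}, equal the true linearization of the information-state dynamics along the current nominal trajectory. This is precisely the step that fails in the partially observed case. With the exact $(\mathcal{A}_t, \mathcal{B}_t)$ in hand, the backward pass becomes deterministic: the recursions \eqref{eq:vt}--\eqref{eq:Vt}, run from the terminal conditions $V_T, v_T$, return the same $V_t, v_t$, and therefore the same gains $k_t, K_t$ and the same residual $R\bar{u}_t + \mathcal{B}_t\T v_{t+1}$, as the deterministic POD-iLQR evaluated at the same nominal trajectory. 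The only place process noise can still enter is the forward rollout that produces the next nominal trajectory.

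Next I would control that forward pass. Because $l$, $c_T$, $f$, $g$ are $\mathcal{C}^2$ and $f$ is affine in control, the closed-loop rollout driven by $\bar{u}_t + \alpha k_t + K_t\,\delta x_t$ has a linearization stabilized by $K_t$, keeping $\delta x_t$ small; since $\omega_t$ is zero-mean and uncorrelated across time, the expected deviation of the realized nominal trajectory from the deterministic one vanishes to first order, and averaging over $n_s$ rollouts removes the residual fluctuation. Consequently the update $\bar{u}_t^{k+1} = \bar{u}_t^{k} + \alpha k_t + K_t(\mathcal{Z}_t^{k+1} - \mathcal{Z}_t^{k})$ agrees in expectation with the deterministic POD-iLQR update, so the two iterations define the same map on nominal controls. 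Invoking the deterministic global-convergence result of \cite{POD2C, tropaper2}, which uses exactly the $\mathcal{C}^2$ and control-affine hypotheses stated here, the common fixed point is the unique global minimizer of \eqref{eq:OptProbZ}, at which the termination condition $R\bar{u}_t + \mathcal{B}_t\T v_{t+1}\approx 0$ holds.

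The main obstacle I anticipate is the forward-pass step: making rigorous the claim that the stochastic rollout introduces no persistent bias in the fixed point. Unbiasedness of the identified model (Lemma \ref{lm:arma_full}) guarantees the search direction is correct in expectation, but one must still argue that the zero-mean, uncorrelated noise does not shift the stationary point of the coupled forward--backward iteration. This is where the stabilizing feedback $K_t$ and the averaging over $n_s$ rollouts must be used carefully, in sharp contrast to the partially observed case, where these same devices do not suffice and a genuine bias survives.
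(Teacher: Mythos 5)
Your first step matches the paper exactly: Lemma \ref{lm:arma_full} makes the identified $(\mathcal{A}_t,\mathcal{B}_t)$ coincide with the true linearization, so the backward pass and the gains $k_t, K_t$ are the same as in the noiseless algorithm evaluated at the same nominal trajectory. The divergence, and the gap, is in how you handle the noisy forward pass. You try to argue that the stochastic iteration map equals the deterministic one ``in expectation'' and then invoke the deterministic convergence theorem on the ``common fixed point.'' This does not close the argument: an iteration whose update is only correct in expectation is a stochastic process, not a deterministic map, and having an unbiased update direction at every step does not by itself imply the realized sequence of nominal trajectories converges, let alone to the deterministic algorithm's limit. You explicitly flag this as ``the main obstacle I anticipate'' but then do not resolve it. You also lean on averaging the forward pass over $n_s$ rollouts, which is not part of the fully observed modification in Section \ref{s:modify}.1 (averaging is introduced only as the brute-force fix for the partially observed case); in the fully observed algorithm the nominal trajectory is produced by a single noisy rollout, which is exactly why the realized update direction is random.

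The paper closes this gap with a stochastic-approximation argument that your proposal is missing. Writing $d_{s,t}$ for the realized update direction and $\mathcal{F}^t$ for the algorithm history, zero-mean $\omega_t$ gives $E[d_{s,t}\,/\,\mathcal{F}^t] = \bar{d}_{s,t}$, the true descent direction from Lemma 2 of \cite{tropaper2}, so $\bar{d}_{s,t}'\nabla J_t' < 0$. Combined with the iLQR line search this yields a conditional expected decrease
\begin{equation*}
E[J_{t+1}\,/\,\mathcal{F}^t] \leq J_t - \bar{\beta}_t \,\|\nabla J_t\|\,\|\bar{d}_{s,t}\|, \qquad \bar{\beta}_t > 0,
\end{equation*}
and the Supermartingale Convergence Theorem then gives $\sum_t \bar{\beta}_t \|\nabla J_t\| < \infty$ almost surely, hence $\nabla J_t \to 0$ almost surely, i.e., almost-sure convergence to a stationary point; global optimality of that stationary point is then imported from Theorem 2 of \cite{tropaper2} and Theorem III.1 of \cite{POD2C}. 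To repair your proof you need this (or an equivalent) probabilistic convergence mechanism; the ``same fixed point as the deterministic map'' reduction cannot substitute for it.
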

\begin{proof}
From Lemma \ref{lm:arma_full}, we know that the identified LTV model in the fully observed case is the same as in the noiseless case for small enough noise. Thus the results from the backward pass are also identical. Let us denote the update direction calculated in the forward pass under process noise as $d_{s,t}=[\delta \mathcal{Z}_{t+1}'~~\delta \mathcal{Z}_t'~~\delta u_t']'$, gradient of the system dynamics constraint function as:
$
% \begin{align}
    \nabla h(\bar{\mathcal{Z}}_{t+1},\bar{\mathcal{Z}}_t,\bar{u}_t)=[I~~- \mathcal{A}_{t}~~-\mathcal{B}_{t}].
% \end{align}
$
Let $\mathcal{F}^t$ denote the history of the algorithm till time $t$. Then, due to the zero mean noise $\omega_t$, it is easy to see that the expected descent direction conditioned on the history $\mathcal{F}^t$:
$
% \begin{align}
E[d_{s,t}/ \mathcal{F}^t] = \bar{d}_{s,t},
% \end{align}
$
where $\bar{d}_{s,t}$ denotes the true update direction (without noise). We know from Lemma 2 in \cite{tropaper2}, that $\bar{d}_{s,t}$ is a descent direction of the cost function, i.e., $\bar{d}_{s,t}'\nabla J'_t$ is always negative. Thus the expected update direction is a descent direction. 
Then using the line search condition of iLQR, similar to Theorem 1 in \cite{tropaper2}, it can be shown that:
$
% \begin{equation*}
    E[J_{t+1}/ \mathcal{F}^t] \leq J_t - \bar{\beta}_t ||\nabla J_t||||\bar{d}_{s,t}||,
% \end{equation*}
$
for some $\bar{\beta}_t > 0$. Then, using the Supermartingale Convergence Theorem \cite{neuroDPbook}, it follows that, almost surely:
$
% \begin{equation*}
    \sum_t \bar{\beta}_t ||\nabla J_t|| < \infty,
% \end{equation*}
$
which implies that $\nabla J_t \rightarrow 0$ almost surely, i.e., the algorithm converges to a stationary point of the cost function. Next, using Theorem 2 of \cite{tropaper2}
and Theorem III.1 of \cite{POD2C}, POD-iLQR is guaranteed to converge to the global minimum of the open-loop problem in Eq.~\eqref{eq:OptProbZ}.
\end{proof}

\subsubsection{Biased Nature of POD-iLQR in the Partially Observed Case}
For the partially observed case, process noise $\omega_t$ is added to the system dynamics simulation in both the forward and backward pass. The measurement noise $v_t$ is added to the measured states as shown in Eq.~\eqref{eq:partialsys}. According to Section \ref{s:infostate}, we need to choose a large enough $q$ such that matrix $O^q$ is full column rank. Also, a feedback term is added in the ARMA fitting step of the backward pass to keep the trajectory close to the nominal trajectory. The feedback gain $K_t$ is obtained from the last backward pass. With these modifications, we have the following negative result:
\begin{lemma}\label{lm:partialdir}
If directly applied to the partially observed system in Eq.~\eqref{eq:partialsys}, the backward pass shown in Section \ref{s:PODiLQR} generates a biased update direction.
\end{lemma}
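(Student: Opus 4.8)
The plan is to trace exactly where the argument behind Lemma~\ref{lm:arma_full} breaks in the partially observed setting and to show that the failure introduces a non-vanishing bias in the identified ARMA parameters, which then propagates through the backward pass. First I would write the noisy LTV recursion $\delta x_{t} = A_{t-1}\delta x_{t-1} + B_{t-1}\delta u_{t-1} + \omega_{t-1}$ together with the partial, noisy measurement $\delta z_t = C_t\delta x_t + v_t$. The essential contrast with Section~\ref{s:modify}.1 is that the ARMA regression now uses the \emph{measurements} $\delta z_{t-1},\dots,\delta z_{t-q}$ as regressors, and each of these carries its own measurement noise $v_{t-k}$ as well as accumulated process noise entering through $\delta x_{t-k}$. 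This is an errors-in-variables situation in which the regressors are correlated with the equation error, whereas in the fully observed case $\omega_t$ was uncorrelated with the regressors $\delta z_t,\delta u_t$ and the least-squares fit was clean.

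Next I would make the bias explicit in the large-sample limit. Stacking the regressors into $\phi_t = (\delta z_{t-1}^{T},\dots,\delta z_{t-q}^{T},\delta u_{t-1}^{T},\dots,\delta u_{t-q}^{T})^{T}$ and collecting the ARMA coefficients into $\theta$, the least-squares fit over $n_s$ rollouts converges, as $n_s\to\infty$, to $\theta^{\ast} = (E[\phi_t\phi_t^{T}])^{-1}\,E[\phi_t\,\delta z_t^{T}]$. Writing the exact ARMA identity of Proposition~\ref{p:ARMA} for the \emph{noise-free} measurements $C_t\delta x_t$ and substituting $\delta z_{t-k}=C_{t-k}\delta x_{t-k}+v_{t-k}$ yields the representation $\delta z_t = \phi_t^{T}\theta_0 + e_t$, whose equation error $e_t$ collects $v_t$, the terms $-\alpha_{t-k}v_{t-k}$, and the accumulated process-noise contributions that enter through $\delta x_{t-k}$. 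Hence $\theta^{\ast} = \theta_0 + (E[\phi_t\phi_t^{T}])^{-1}E[\phi_t e_t^{T}]$, and the whole claim reduces to showing $E[\phi_t e_t^{T}]\neq 0$.

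I would establish this by isolating the cleanest non-vanishing cross-term. Since $v_{t-k}$ is zero-mean, i.i.d., independent of the injected perturbations $\delta u$, and uncorrelated with $\delta x_{t-k}$ (which depends only on earlier process noise and earlier inputs), the block of $E[\phi_t e_t^{T}]$ associated with the regressor $\delta z_{t-k}$ contains $E[\delta z_{t-k}\,v_{t-k}^{T}](-\alpha_{t-k}^{T}) = -E[v_{t-k}v_{t-k}^{T}]\,\alpha_{t-k}^{T}$, which is nonzero whenever the measurement-noise covariance is nonzero and $\alpha_{t-k}\neq 0$. The main obstacle is the bookkeeping: the process noise also enters both $\phi_t$ (via $\delta x_{t-k}$) and $e_t$ (via the same dynamics), so I must verify that these extra cross-correlations cannot conspire to cancel the measurement-noise term. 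The key point is that the measurement-noise piece $-E[v_{t-k}v_{t-k}^{T}]\alpha_{t-k}^{T}$ is an autocorrelation structurally independent of the process-noise covariance, so generically it does not cancel and suffices to certify $E[\phi_t e_t^{T}]\neq 0$.

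Finally I would propagate the bias to the update direction. Because the identified coefficients $\alpha_{t-k},\beta_{t-k}$ are biased, the matrices $\mathcal{A}_t,\mathcal{B}_t$ assembled in Eq.~\eqref{c4tab} differ from the true LTV information-state dynamics; the backward-pass recursion~\eqref{eq:vt} for $v_t$, the companion recursion for $V_t$, and the gains $k_t,K_t$ are all smooth functions of $(\mathcal{A}_t,\mathcal{B}_t)$, so they inherit the bias. Consequently the update direction $d_{s,t}=[\delta \mathcal{Z}_{t+1}'~\delta \mathcal{Z}_t'~\delta u_t']'$ no longer satisfies $E[d_{s,t}/\mathcal{F}^t]=\bar d_{s,t}$, the true descent direction used in Theorem~\ref{InformState_optimality_full}; it is biased, as claimed, which is precisely why the clean Supermartingale argument of the fully observed case cannot be carried over.
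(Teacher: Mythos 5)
Your proposal is correct and follows essentially the same route as the paper: both reduce the claim to the exact ARMA regression equation whose error term collects the process-noise and measurement-noise contributions, observe that the regressors $\delta Z_t^q$ are correlated with that error, conclude that the least-squares estimate of $(\alpha_t,\beta_t)$ is biased, and propagate the bias through the backward pass. If anything, your argument is slightly more careful than the paper's, which simply asserts that the correlation makes the bias term nonzero, whereas you isolate the specific non-vanishing block $-E[v_{t-k}v_{t-k}^{T}]\alpha_{t-k}^{T}$ and address the possibility of cancellation.
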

\begin{proof}
Starting from Eq.~\eqref{eq:partialsys}, we can write the output equation for past $q$ timesteps.
% we have the following LTV system:
% $
% % \begin{align}
%     z_{t+1} = C_tA_tx_t+C_tB_tu_t+C_t\omega_t+v_{t+1}.
%  % \end{align}
%  $
Assuming that the $q$ we choose satisfies Assumption \ref{a:obsv}, we can solve for the unique solution of $\delta x_{t-q}$ as:
$
% \begin{align}
\delta x_{t-q} = O^{q^+} (\delta Z_t^q - G^q \delta U_t^q-G^q_{\omega} \Omega^q_t-V_t^q).
% \end{align}
$
Now, the system output at time $t$ can be written as:
\begin{align}
    \delta z_{t} &= C_{t}A_{t-1}...A_{t-q}\delta x_{t-q} \nonumber \\
    &+\begin{bmatrix}
    C_{t}B_{t-1} & C_{t}A_{t-1}B_{t-2} ~ \cdots ~ C_{t}A_{t-1}...B_{t-q}
    \end{bmatrix}\delta U_t^q,  \nonumber \\
    &+\begin{bmatrix}
    C_{t} & C_{t}A_{t-1}~ \cdots ~ C_{t}A_{t-1}...A_{t-q+1}
    \end{bmatrix}\Omega^q_t+v_t.
\end{align}
Let us denote
\begin{align}
    \alpha_t &= C_{t}A_{t-1}...A_{t-q}O^{q^+}, \nonumber \\
    \beta_t&=\begin{bmatrix}
    C_{t}B_{t-1} & C_{t}A_{t-1}B_{t-2} ~ \cdots ~ C_{t}A_{t-1}...B_{t-q}
    \end{bmatrix}-\alpha_t G^q, \nonumber \\
    \beta_t^D&=\begin{bmatrix}
    C_{t} & C_{t}A_{t-1} ~ \cdots ~ C_{t}A_{t-1}...A_{t-q+1}\end{bmatrix}-\alpha_tG^q_{\omega},\nonumber
\end{align}
and the unique solution for $\delta x_{t-q}$ is substituted to get:
$
% \begin{align}
    \delta z_{t} = \alpha_t \delta Z_t^q+\beta_t \delta U_t^q +\beta_t^D\Omega^q_t-\alpha_tV_t^q+v_t.
% \end{align}
$
By taking $n_s$ number of rollouts and applying the linear least square as described in Section \ref{s:modify}.2, the estimated ARMA model parameters can be written as:
\begin{align}\label{eq:partialsysid}
    \begin{bmatrix}\hat{\alpha}_t &\hat{\beta}_t\end{bmatrix} &= \begin{bmatrix}\alpha_t& \beta_t\end{bmatrix}+(\beta_t^D\begin{bmatrix}\Omega^{q,(1)}_t &\Omega^{q,(2)}_t&\cdots&\Omega^{q,(n_s)}_t\end{bmatrix} \nonumber\\
    &-\alpha_t\begin{bmatrix}V_t^{q,(1)} &V^{q,(2)}_t&\cdots&V^{q,(n_s)}_t\end{bmatrix}\nonumber\\
    &+\begin{bmatrix}v_t^{(1)} &v^{(2)}_t&\cdots&v^{(n_s)}_t\end{bmatrix})\mathbb{X} \T(\mathbb{X}\mathbb{X} \T)^{-1},
\end{align}
where in this case,
\begin{align}
    \mathbb{X} &= \begin{bmatrix} \delta {Z_t^{q,(1)}}& \delta {Z_t^{q,(2)}}& \ldots &\delta {Z_t^{q,(n_s)}} \\ \delta {U_t^{q,(1)}}& \delta {U_t^{q,(2)}}& \ldots &\delta {U_t^{q,(n_s)}} \end{bmatrix}.
\end{align}
As $\delta {Z_t^{q}}$ is correlated with $\Omega^{q}_t$ and $V_t^q$, the second term on the RHS of Eq.~\eqref{eq:partialsysid} is nonzero. Thus the estimated ARMA parameters are biased from the true values in $\begin{bmatrix}\alpha_t& \beta_t\end{bmatrix}$. Further, the update direction from Section \ref{s:modify} is biased.
\end{proof}

With the biased update direction, the POD-iLQR algorithm can no longer converge to the true minimum although the feedback term is added to the backward pass. In this case, multiple rollouts have to be averaged to recover the convergence to the true minimum. In the next section, we show empirical evidence for Theorem \ref{InformState_optimality_full} and Lemma \ref{lm:partialdir}.

%%%%%%%%%%%%%%%%%%%%%%%%%%%%%%%%%%%%%%%%%%%%%%%%%%%%%%%%%%%%%%%%%%%
\section{Empirical Results}
% \textcolor{red}{Didn't you have results where you show that the modified iLQR can handle more noise than the standard iLQR? Present those results, they will tell the reader the importance of the modifications you propose. }
We use MuJoCo, a physics engine \cite{todorov2012mujoco}, as a blackbox to collect the data needed for the open-loop nominal trajectory design and the closed-loop feedback gain. We verified our results on two nonlinear systems with their initial configuration shown in Fig.~\ref{figinit}. All simulations are conducted on a machine with the following specifications: AMD Ryzen 3700X 8-Core CPU@3.59 GHz, with 16 GB RAM, with no multi-threading. 
% The algorithms are implemented in MatLab.

\begin{figure}[!ht]
\begin{multicols}{2}
    \hspace{1cm}   
    \subfloat{\includegraphics[width=.76\linewidth]{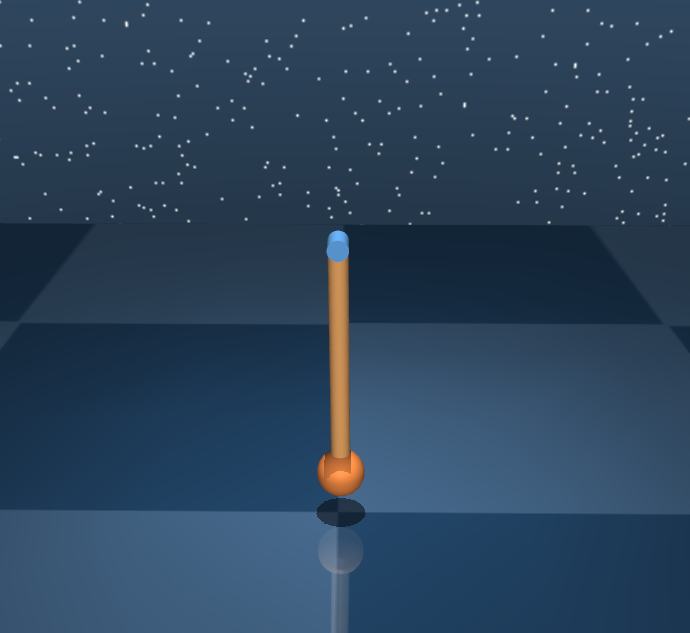}}
    \subfloat{\includegraphics[width=.80\linewidth]{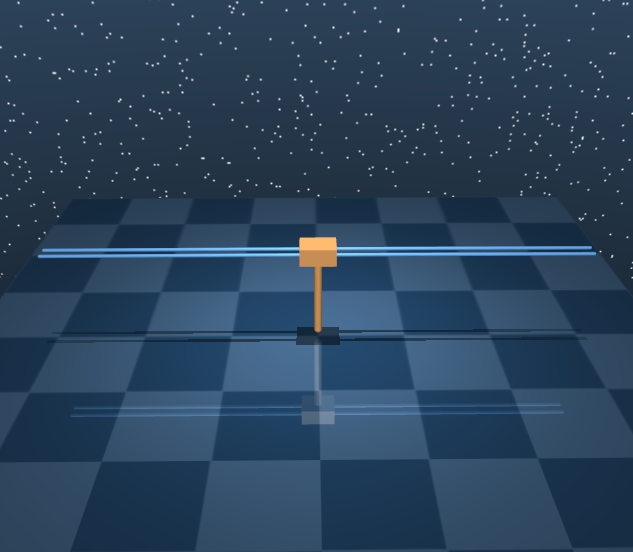}}  
\end{multicols}
\caption{Models simulated in MuJoCo in their initial states.}
\label{figinit}
\end{figure}

\vspace{-0.5cm}
\subsection{Model Description}
Here we provide details of the MuJoCo models used in our simulations \cite{tunyasuvunakool2020}.

\noindent \textbf{Pendulum:} The single pendulum model is a pole hinged to a fixed point. There are two state variables: angle and angular rate of the pole. The task is to swing up and balance the pole in the upright position.\\
\noindent \textbf{Cart-Pole:} The four-dimensional under-actuated cart-pole model includes a cart moving on the x-axis and a pole linked to it with a hinge joint. The only actuation is the force on the cart. The state comprises the angle of the pole, the cart's horizontal position, and their rates. The task is to swing up and balance the pole in the middle of the rail within a given horizon.\\

\begin{figure}[!ht]
\centering
    \includegraphics[width=1\linewidth]{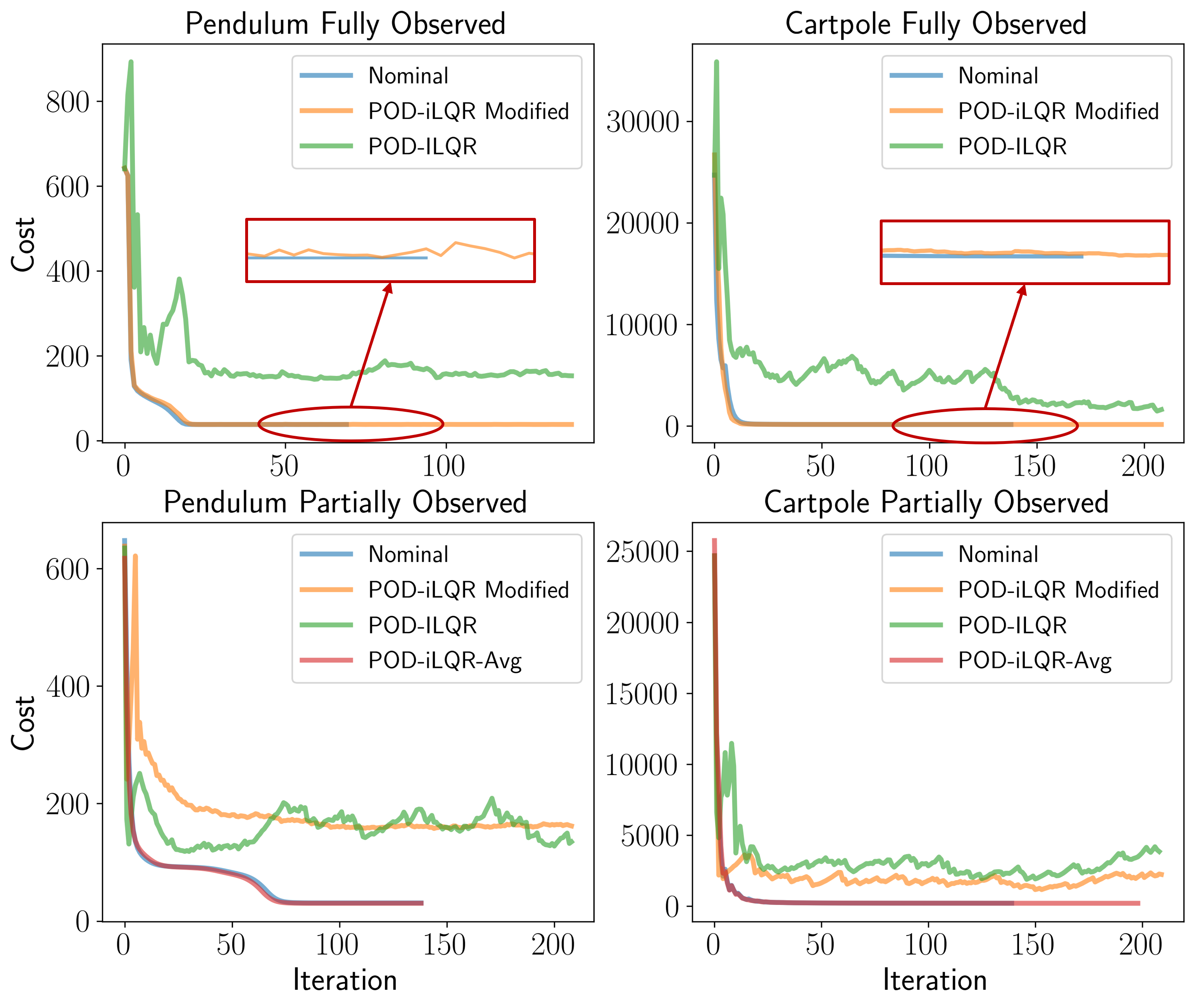}
\caption{Convergence comparison in fully and partially observed cases.}\vspace{-0.5cm}
\label{figconvfullpartial}
\end{figure}
% \vspace{-0.5cm}

\subsection{POD-iLQR in the Fully Observed Case}
As we assume perfect measurements in the fully observed case, the process noise is the only uncertainty we add to the dynamics. In addition, we sample the initial state deviation $\delta x_0$ from a zero-mean random process. The standard deviation of the process noise $\omega_t$ is 10\% w.r.t. the standard deviation of the initial state deviation $\delta x_0$. For each system tested, we run POD-iLQR as is in the noiseless system as well as in the fully observed system with process noise. Then we run the POD-iLQR with the modifications proposed in Section \ref{s:modify}. The number of steps in the horizon is fixed, so in the data collection step, each rollout takes the same number of steps. In the cost function, the running cost $l(x_t)=x_t'Qx_t$, where $x_t$ is the error between the current state and the target state at time $t$. The cost parameter $Q$ remains the same throughout the horizon except for the terminal step. The termination criterion is that the convergence rate is lower than a threshold or it reaches the max iteration number. We compare the cost convergence curves in the first row of Fig.~\ref{figconvfullpartial}. The ``nominal" curve shows the cost convergence of applying POD-iLQR on the full state noiseless case. The curve labeled ``POD-iLQR" shows the cost convergence in the full state case under process noise using the original POD-iLQR algorithm. The curve labeled ``POD-iLQR Modified" shows the cost convergence in the full state case under process noise using the modified POD-iLQR. From the plot, the original POD-iLQR could not converge to the true optimum and the other two curves almost overlap each other and converge to the same result. The ``POD-iLQR Modified" curve in the zoomed-in view has some ups and downs due to the process noise in the forward pass. This verifies the proof in Theorem \ref{InformState_optimality_full} which shows that the expected update direction, not the actual update direction is a descending direction. Thus in the fully observed case, the modified POD-iLQR is guaranteed to converge to the global minimum. Notice that the total number of rollouts needed under noise is larger than in the noiseless case to make sure that the correlation goes to zero as shown in Lemma \ref{lm:arma_full}. 

\subsection{POD-iLQR in the Partially Observed Case}
In the partially observed case, we only measure the positions, not their rates. For the pendulum, the observed state is the angle of the pole. In the cartpole, we only measure the position of the cart and the angle of the pole. To simulate the measurements in simulation, we add sensor noise $v_t$ to the measurements. Thus there are both process and measurement noise in the simulation. The standard deviations of the process noise $\omega_t$ and the measurement noise $v_t$ are both 10\% w.r.t. the standard deviation of the initial state deviation $\delta x_0$. We found that in this observation setting, $q=2$ satisfies the observability assumption. So in the LTV-ARMA system identification step, we fit ARMA models with $q=2$ for both the pendulum and cartpole. To evaluate the cost function, we use the measurements instead of the states. Similar to the full-state observation case, the cost function is quadratic in the information state and the control input. Three cost curves are shown in the second row of Fig.~\ref{figconvfullpartial}. The curve labeled ``nominal" shows the cost convergence of POD-iLQR in the partially observed but noiseless environment. For the curve labeled ``POD-iLQR", we directly apply the unmodified POD-iLQR algorithm and in the ``POD-iLQR Modified" case, we use the modified POD-iLQR. In the case labeled ``POD-iLQR-Avg", we run multiple rollouts for each set of control perturbation $\delta u_t$ and used the averaged trajectory in the ARMA model fitting step to average out the noise. From the plots, it is shown that in both the pendulum and the cartpole, the cost curves of the averaging method match the nominal cost curve and they converge to the same result. The outlier curve is from the experiment where we applied POD-iLQR or modified POD-iLQR without averaging. Due to the noise corrupted system dynamics and measurements, the cost curve has more oscillation and failed to converge to the true minimum. Thus if directly implemented on real robots without averaging, POD-iLQR will generate a biased result even with the modification proposed in Section \ref{s:modify}. To make sure the noise is averaged out in the averaging method, the total number of rollouts needed in the ARMA model fitting is increased from $n_s$ to $n_s \times n_{avg}$, where $n_s$ is the number of rollouts in one ARMA model fitting without averaging and $n_{avg}$ is the number of rollouts needed for each control perturbation set. And the total time taken during training will be much higher than the case without averaging.

%%%%%%%%%%%%%%%%%%%%%%%%%%%%%%%%%%%%%%%%%%%%%%%%%%%%%%%%%%%%%%%%%%%%%%%%%%%%%%%%%%%%%%%
\section{Conclusions}
This paper considers an optimal motion planning trajectory design algorithm for partially observed systems called the POD-iLQR introduced in our prior work. 
The main focus of this paper is to analyze its performance under uncertainty and pave the way for future use on real-world robots.
The algorithm is proved to converge to the global minimum in the fully observed case with only process noise.
It is shown that the algorithm is biased and does not converge to the global minimum for partially observed systems with both process and measurement noise. In this case, multiple rollouts need to be averaged to recover optimality and for convergence in the ARMA model identification step at the expense of a longer training time.
The empirical results are shown to verify the analysis.
In our opinion, this algorithm has advantages in optimality and training efficiency when applied to real-world robots. The actual performance on real systems will be explored in future work.

% \clearpage
%\bibliographystyle{IEEEtran}
%\bibliography{Bib,bib1}
\printbibliography

@INPROCEEDINGS{cdc2021,
  author={Wang, Ran and Parunandi, Karthikeya S. and Sharma, Aayushman and Goyal, Raman and Chakravorty, Suman},
  booktitle={2021 60th IEEE Conference on Decision and Control (CDC)}, 
  title={On the Search for Feedback in Reinforcement Learning}, 
  year={2021},
  volume={},
  number={},
  pages={1560-1567},
  doi={10.1109/CDC45484.2021.9683350}
}

@article{tropaper2,
  title={On the Search for Feedback in Reinforcement Learning},
  author={Wang, Ran and Parunandi, Karthikeya and Sharma, Aayushman and Goyal, Raman and Chakravorty, Suman},
  journal={arXiv preprint arXiv:2002.09478},
  year={2020}
}

@article{mnih2015human,
  title={Human-level control through deep reinforcement learning},
  author={Mnih, Volodymyr and Kavukcuoglu, Koray and Silver, David and Rusu, Andrei A and Veness, Joel and Bellemare, Marc G and Graves, Alex and Riedmiller, Martin and Fidjeland, Andreas K and Ostrovski, Georg and others},
  journal={Nature},
  volume={518},
  number={7540},
  pages={529},
  year={2015},
  publisher={Nature Publishing Group}
}

@article{levine2016end,
  title={End-to-end training of deep visuomotor policies},
  author={Levine, Sergey and Finn, Chelsea and Darrell, Trevor and Abbeel, Pieter},
  journal={The Journal of Machine Learning Research},
  volume={17},
  number={1},
  pages={1334--1373},
  year={2016},
  publisher={JMLR. org}
}

@book{dp_bertsekas,
	Author = {Bertsekas, Dimitri P.},
	Date-Added = {2017-03-03 19:23:31 +0000},
	Date-Modified = {2017-03-16 15:52:42 +0000},
	Edition = {2nd},
	Publisher = {Athena Scientific},
	Title = {Dynamic Programming and Optimal Control, Two Volume Set},
	Year = {1995}}

@inproceedings{Wang_ICRA_2021,
  title={Data-based Control of Partially-Observed Robotic Systems},
  author={Wang$^*$, R. and Goyal$^*$, R. and Chakravorty, S. and Skelton, R. E.},
  booktitle={IEEE International Conference on Robotics and Automation (ICRA), May 30 to June 5, Xi’an China},
  volume = {},
  pages = {},
  year = {2021}
}

@book{neuroDPbook,
author = {Bertsekas, Dimitri P. and Tsitsiklis, John N.},
title = {Neuro-Dynamic Programming},
year = {1996},
isbn = {1886529108},
publisher = {Athena Scientific},
edition = {1st}
}

@article{tunyasuvunakool2020,
 title = {dm\_control: Software and tasks for continuous control},
 journal = {Software Impacts},
 volume = {6},
 pages = {100022},
 year = {2020},
 issn = {2665-9638},
 doi = {https://doi.org/10.1016/j.simpa.2020.100022},
 author = {Saran Tunyasuvunakool and Alistair Muldal and Yotam Doron and
           Siqi Liu and Steven Bohez and Josh Merel and Tom Erez and
           Timothy Lillicrap and Nicolas Heess and Yuval Tassa}
}

@inproceedings{QTOPT,
  title={Scalable deep reinforcement learning for vision-based robotic manipulation},
  author={Kalashnikov, Dmitry and Irpan, Alex and Pastor, Peter and Ibarz, Julian and Herzog, Alexander and Jang, Eric and Quillen, Deirdre and Holly, Ethan and Kalakrishnan, Mrinal and Vanhoucke, Vincent and others},
  booktitle={Conference on Robot Learning},
  pages={651--673},
  year={2018},
  organization={PMLR}
}

@InProceedings{khadka2019,
  title = 	 {Collaborative Evolutionary Reinforcement Learning},
  author =       {Khadka, Shauharda and Majumdar, Somdeb and Nassar, Tarek and Dwiel, Zach and Tumer, Evren and Miret, Santiago and Liu, Yinyin and Tumer, Kagan},
  booktitle = 	 {Proceedings of the 36th International Conference on Machine Learning},
  pages = 	 {3341--3350},
  year = 	 {2019},
  editor = 	 {Chaudhuri, Kamalika and Salakhutdinov, Ruslan},
  volume = 	 {97},
  series = 	 {Proceedings of Machine Learning Research},
  month = 	 {09--15 Jun},
  publisher =    {PMLR}
}

@article{james2019,
  title={Sim-To-Real via Sim-To-Sim: Data-Efficient Robotic Grasping via Randomized-To-Canonical Adaptation Networks},
  author={Stephen James and Paul Wohlhart and Mrinal Kalakrishnan and Dmitry Kalashnikov and Alex Irpan and Julian Ibarz and Sergey Levine and Raia Hadsell and Konstantinos Bousmalis},
  journal={2019 IEEE/CVF Conference on Computer Vision and Pattern Recognition (CVPR)},
  year={2019},
  pages={12619-12629}
}

@INPROCEEDINGS{rao2020,
  author={Rao, Kanishka and Harris, Chris and Irpan, Alex and Levine, Sergey and Ibarz, Julian and Khansari, Mohi},
  booktitle={2020 IEEE/CVF Conference on Computer Vision and Pattern Recognition (CVPR)}, 
  title={RL-CycleGAN: Reinforcement Learning Aware Simulation-to-Real}, 
  year={2020},
  volume={},
  number={},
  pages={11154-11163},
  doi={10.1109/CVPR42600.2020.01117}}

@misc{Rudin2021,
  doi = {10.48550/ARXIV.2109.11978},
  
  url = {https://arxiv.org/abs/2109.11978},
  
  author = {Rudin, Nikita and Hoeller, David and Reist, Philipp and Hutter, Marco},
  
  title = {Learning to Walk in Minutes Using Massively Parallel Deep Reinforcement Learning},
  
  publisher = {arXiv},
  
  year = {2021}
}

@article{Surmann2020DeepRL,
  title={Deep Reinforcement learning for real autonomous mobile robot navigation in indoor environments},
  author={Hartmut Surmann and Christian Jestel and Robin Marchel and Franziska Musberg and Houssem Elhadj and Mahbube Ardani},
  journal={ArXiv},
  year={2020},
  volume={abs/2005.13857}
}

@inproceedings{
	Li2021cassie, 
	author={Zhongyu Li and Xuxin Cheng and Xue Bin Peng and Pieter Abbeel and Sergey Levine and Glen Berseth and Koushil Sreenath}, 
	booktitle={2021 IEEE International Conference on Robotics and Automation (ICRA)}, 
	title={Reinforcement Learning for Robust Parameterized Locomotion Control of Bipedal Robots}, 
	year={2021}, 
	volume={}, 
	number={}, 
	pages={1-7}
}

@misc{POD2C,
  doi = {10.48550/ARXIV.2107.08086},
  
  url = {https://arxiv.org/abs/2107.08086},
  
  author = {Goyal, Raman and Wang, Ran and Chakravorty, Suman and Skelton, Robert E.},
  
  title = {Partially-Observed Decoupled Data-based Control (POD2C) for Complex Robotic Systems},
  
  publisher = {arXiv},
  
  year = {2021},
}

@inproceedings{
ingredientofrw,
title={The Ingredients of Real World Robotic Reinforcement Learning},
author={Henry Zhu and Justin Yu and Abhishek Gupta and Dhruv Shah and Kristian Hartikainen and Avi Singh and Vikash Kumar and Sergey Levine},
booktitle={International Conference on Learning Representations},
year={2020},
url={https://openreview.net/forum?id=rJe2syrtvS}
}

@article{DRLreview,
author = {Julian Ibarz and Jie Tan and Chelsea Finn and Mrinal Kalakrishnan and Peter Pastor and Sergey Levine},
title ={How to train your robot with deep reinforcement learning: lessons we have learned},
journal = {The International Journal of Robotics Research},
volume = {40},
number = {4-5},
pages = {698-721},
year = {2021},
doi = {10.1177/0278364920987859},
}

@inproceedings{todorov2012mujoco,
  title={Mujoco: A physics engine for model-based control},
  author={Todorov, Emanuel and Erez, Tom and Tassa, Yuval},
  booktitle={2012 IEEE/RSJ International Conference on Intelligent Robots and Systems},
  pages={5026--5033},
  year={2012},
  organization={IEEE}
}
\end{document}